\title{Lanczos Approximations for the Speedup of \\Kernel Partial Least Squares Regression\footnote{to appear in Proceedings of the 12th International Conference on Artificial Intelligence and Statistics (AISTATS 09)}}
\author{
Nicole Kr\"amer \\Machine Learning Group\\
Berlin Institute of Technology\\
\texttt{nkraemer@cs.tu-berlin.de}
\and
Masashi Sugiyama \\
Department of Computer Science\\
Tokyo Institute of Technology\\
\texttt{sugi@cs.titech.ac.jp}
\and
Mikio L. Braun \\
Machine Learning Group\\
Berlin Institute of Technology\\
\texttt{mikio@cs.tu-berlin.de} }
\newtheorem{conjecture}{Conjecture}
\newtheorem{coro}[conjecture]{Corollary}
\newtheorem{propo}[conjecture]{Proposition}
\newtheorem{defi}[conjecture]{Definition}
\newcommand{\bbeta}{\mbox{\boldmath$\beta$}}
\newcommand{\balpha}{\mbox{\boldmath$\alpha$}}
\bmdefine\E{E} \bmdefine\h{h} \bmdefine\J{J} \bmdefine\c{c}
\bmdefine\C{C} \bmdefine\U{U} \bmdefine\X{X} \bmdefine\D{D}
\bmdefine\K{K} \bmdefine\Z{Z} \bmdefine\x{x} \bmdefine\z{z}
\bmdefine\y{y} \bmdefine\Y{Y} \bmdefine\bfalpha{\alpha}
\bmdefine\bfmu{\mu} \bmdefine\M{M} \bmdefine\Q{Q} \bmdefine\P{P}
\bmdefine\w{w} \bmdefine\W{W} \bmdefine\p{p} \bmdefine\T{T}
\bmdefine\t{t} \bmdefine\r{r} \bmdefine\a{a} \bmdefine\B{B}
\bmdefine\I{I} \bmdefine\u{u} \bmdefine\p{p} \bmdefine\Sig{\Sigma}
\bmdefine\E{E} \bmdefine\F{F} \bmdefine\S{S} \bmdefine\s{s}
\bmdefine\w{w} \bmdefine\b{b} \bmdefine\W{W} \bmdefine\w{w}
\bmdefine\V{V} \bmdefine\v{v} \bmdefine\q{q} \bmdefine\R{R}
\bmdefine\A{A} \bmdefine\H{H} \bmdefine\d{d} \bmdefine\g{g}
\bmdefine\e{e} \bmdefine\L{L} \bmdefine\k{k} \bmdefine\N{N}
\bmdefine\y{y} \bmdefine\G{G}
\newcommand{\PP}{\mathcal{P}}
\newcommand{\betahat}{\widehat\bbeta}
\newcommand{\alphahat}{\widehat\balpha}
\DeclareMathOperator*{\sinc}{sinc}
\newcommand{\bigO}{\mathcal{O}}
\newcommand{\dof}{\operatorname{DoF}}
\newcommand{\trace}{\operatorname{trace}}
\begin{document}

\maketitle

\begin{abstract}
The runtime for Kernel Partial Least Squares (KPLS) to compute the fit is
  quadratic in the number of examples. However, the necessity of obtaining sensitivity measures as degrees of freedom for model selection or confidence intervals for more detailed analysis requires cubic runtime, and thus constitutes  a computational bottleneck in real-world data analysis. We propose a novel algorithm
  for KPLS which not only computes (a) the fit, but also (b) its approximate degrees
  of freedom and (c) error bars in quadratic runtime. The
  algorithm exploits a close connection between Kernel PLS and the Lanczos
  algorithm for approximating the eigenvalues of symmetric matrices,
  and uses this approximation to compute the trace of powers of the
  kernel matrix in quadratic runtime.
\end{abstract}

\section{INTRODUCTION}
\label{sec:intro}
Partial Least Squares (PLS) \cite{Wol1975,WolRuhWolDun1984} is a
supervised dimensionality reduction technique. Given $n$ observations $(\x_i,y_i) \in \mathbb{R}^d  \times \mathbb{R}$, it iteratively
constructs an orthogonal set $\T=\left(\t_1,\ldots,\t_m \right)\in \mathbb{R}^{n \times m}$ of $m$ latent features which have maximal
covariance with the target variable $\y=(y_1,\ldots,y_n)$. For regression, these latent components are used as predictors in a least squares fit instead of the original data leading to fitted values
\begin{eqnarray}
\label{eq:fit}
\widehat \y_m&=& \T \left(\T ^\top \T\right)^{-1} \T ^\top \y=\mathcal{P}_{\T} \y\,,
\end{eqnarray}
where $\mathcal{P}$ denotes the orthogonal projection operator.  PLS is the standard tool e.g. in chemometrics \cite{MarNae1989}, and has been successfully applied in various other scientific fields such
as chemoinformatics, physiology or bioinformatics \cite{SaiKraTsu2008,RosTreMat2003,BouStr2007}. In combination with the kernel trick \cite{RaeLinGelWol1994,RosTre2001}, Kernel Partial Least Squares (KPLS) performs dimensionality reduction and regression in a non-linear fashion.
KPLS has some appealing properties  over existing kernel methods.
Due to its iterative nature, it only relies on matrix-vector multiplications. Hence its runtime is quadratic in the number of training examples , as
opposed to --  for example -- Kernel Ridge Regression, which requires the
inversion of a large symmetric matrix, having time complexity
$\bigO(n^3)$. Furthermore, it is possible to compute the
derivative of the KPLS solution with respect to $\y$ by differentiating the iterative formulation
itself. Taking the trace of the derivative of the fitted values, one
obtains an estimate of the degrees of freedom for KPLS, which can
be used, for example, for effective model selection based on information criteria
like AIC, BIC, or gMDL \cite{KraBra2007}. The first order Taylor approximation can also be used to construct confidence intervals for PLS \cite{Den1997,PhaRilPen2002}. However, since we take the
derivative of a vector (\ref{eq:fit}), the derivative is a matrix, and the
computation of the derivative involves a number of matrix-matrix
multiplications which have time complexity $\bigO(n^3)$ for all
practical considerations.

In this work, we propose an algorithm which computes  (a) the fit of
KPLS as well as (b) its approximate degrees of freedom  and (c) confidence intervals for the KPLS solutions, all in quadratic
runtime.   These results are based on the fact that PLS is equivalent
to the Lanczos method for approximating the eigenvalues of
the kernel matrix $\K$ by the eigenvalues of a tridiagonal $m \times m$ matrix $\D$. The main contribution is to compute these approximate
eigenvalues using KPLS {\emph{itself}}. Then, using a different
formulation of the derivative of the fit in KPLS, one can
approximate the trace of powers $\K^j$ of the kernel matrix using the matrix $\D$. Since $\D$ is typically small (as it scales with the number of components), the runtime for computing the eigenvalues is cubic in $m$, and therefore, unproblematic. Since the powers of the Kernel matrices $\K^j$ are the only matrix-matrix
multiplications of order $n$ in the formula for the degrees of freedom, the approximation leads to quadratic runtime. Hence, we use the KPLS fit to approximate its degrees of freedom. In addition, using the alternative formulation of the derivative, one
can perform a sensitivity analysis of KPLS resulting in confidence intervals
on the estimates, also in quadratic runtime.

This paper is structured as follows. In Section \ref{sec:pls}, we  review  the connection between KPLS
 and Lanczos approximations, and summarize the state-of-the-art for
computing the derivative of Kernel PLS. In Sections \ref{sec:DoF} and \ref{sec:error},  we propose our
novel formulation of the derivative together with the quadratic
runtime algorithms for the degrees of freedom and the confidence intervals. We conclude with some practical examples.

PLS is closely related to Krylov methods. Therefore, we briefly recall
the definition of Krylov subspaces.  For a matrix $\C \in
\mathbb{R}^{c\times c}$ and $\c \in \mathbb{R}^c$, we call the set
of vectors $ \c,\C\c,\ldots,\C^{m-1} \c$ the Krylov sequence of
length $m$. The space spanned by these vectors is called a Krylov
space and is denoted by $\mathcal{K}_m \left(\C,\c\right)$.

\section{BACKGROUND: PLS, LANCZOS METHODS, AND SENSITIVITY ANALYSIS}
\label{sec:pls}
 In this paper, we focus on the NIPALS algorithm \cite{Wol1975} for PLS. For different forms of PLS, see \cite{RosKra2006}. The $n$ centered observations $(\x_i,y_i)$ are pooled into a $n \times d$ data matrix $\X$ and a  vector $\y \in \mathbb{R}^n$. PLS constructs $m$ orthogonal  latent components $\T=\left(\t_1,\ldots,\t_m\right) \in \mathbb{R}^{n \times m}$  in a greedy fashion. The first component $\t_1=\X \w_1$ fulfills
\begin{eqnarray}
\label{eq:critneu} \w_1=\operatorname{arg}\max_{\|\w\|=1}
\text{cov}(\X\w,\y)^2 =\frac{1}{\| \X^\top \y\|} \X^\top \y.
\end{eqnarray}
Subsequent components $\t_2,\t_3,\ldots$ are chosen such that they
maximize the squared covariance to $\y$  and that all components are
mutually orthogonal. Orthogonality can be ensured by  deflating the
original variables $\X$
\begin{eqnarray*}
\label{eq:deflation} \X_i &=& \X - \PP_{\t_1,\ldots,\t_{i-1}} \X \,,
\end{eqnarray*}
and then replacing $\X$ by $\X_i$  in (\ref{eq:critneu}). The matrix $\W=(\w_1,\ldots,\w_m)\in \mathbb{R}^{d\times m}$ can be shown to be orthogonal  as well (e.g. \cite{Hos1988}). Note furthermore that the latent components are usually scaled to unit norm.  Kernel PLS \cite{RaeLinGelWol1994,RosTre2001}
can be derived by  noting that $\w_i= \X^\top \r_i$ with
\begin{eqnarray}
\label{eq:res}
\r_i&=& \left(\y - \widehat \y_{i-1}\right) /\|\K^{1/2}\left(\y- \widehat \y_{i-1}\right)\|
\end{eqnarray}
denoting the normalized residuals, and by deflating the kernel matrix $\K$ instead of $\X$,
\begin{eqnarray*}
\K_i &=& \left(\I_n - \mathcal{P}_{\t_{i-1}}\right)
\K_{i-1}\left(\I_n - \mathcal{P}_{\t_{i-1}}\right)\,.
\end{eqnarray*}
In contrast to e.g. Principal Component Analysis, the latent components $\T$ depend on the response, and hence the fitted values (\ref{eq:fit}) are a nonlinear function of  $\y$.

Recall that in the nonlinear case, KPLS depends on the kernel parameters (e.g. the width of an rbf-kernel) and the optimal number $m$ of latent components. Thus, for model selection, one has to select the optimal combination on a grid of possible kernel parameters and components from $1$ to a maximal amount $m^*$ of components.
\subsection{KRYLOV METHODS AND LANCZOS APPROXIMATION}
To predict the output for a new observation, we have to
derive the regression coefficients $\widehat \bbeta_m$ (in the linear case) and kernel
coefficients $\alphahat _m$ (in the nonlinear case), which are defined via
$\widehat \y_m = \X \betahat_m= \K \alphahat _m$. This can be done by using the fact that PLS is equivalent  to the Lanczos bidiagonalization of $\X$
\cite{Lan1950}: The orthogonal matrices $\T$ and  $\W$ represent a
decomposition of $\X$ into a  bidiagonal matrix $\L \in \mathbb{R}^{m \times m}$ via
\begin{eqnarray}
\label{eq:bi} \X \W &=& \T \L
\end{eqnarray}
with $ l_{ij}=0$ for $i >j $ or $i <j-1$. This matrix is defined as  $\L= \T^\top \X \W $. This implies \cite{Man1987,Hos1988,RosTre2001}
\begin{eqnarray*}
\widehat \bbeta_m= \W \L^{-1} \T^\top \y& \text{ and } &\widehat \balpha_m= \R \L^{-1} \T^\top \y
\end{eqnarray*}
with $\R=(\r_1,\ldots,\r_m) \in \mathbb{R} ^{n \times m}$. Furthermore, it can be shown \cite{PhaHoo2002} that PLS is equivalent to the conjugate gradient (CG)
algorithm \cite{HesSti1952}. The latter is a procedure that
iteratively computes approximate solutions of the normal equation
$\A \bbeta=\b$ (with $\A = \X^\top \X$ and $\b=\X^\top \y$) by minimizing the quadratic
 function $1/2  {\bbeta}^\top {\A} \bbeta - {
\bbeta}^\top {\b}$
along directions that are ${\bf A}$-orthogonal. These search directions span the Krylov space defined by $\A$ and $\b$. The
approximate solution of CG obtained after $m$ steps is equal to the
PLS estimate $\widehat \bbeta_{m}$ with $m$
components. Moreover, the weight vectors $\W$ are an orthogonal
basis of $\mathcal{K}_m(\A,\b)$.

Krylov methods are also used to approximate eigenvalues of $\A$ by
``restricting'' $\A$ onto Krylov subspaces: In terms of the
orthogonal basis $\W$ of $\mathcal{K}_m(\A,\b)$, the map
\begin{eqnarray*}
\D&=&\mathcal{P}_{\mathcal{K}_m(\A,\b)}
\A\mathcal{P}_{\mathcal{K}_m(\A,\b)}
\end{eqnarray*}
is represented by
\begin{eqnarray}
\label{eq:D}\D&=&\W^\top \A \W\,.
\end{eqnarray}
$\D$ is shown to be tridiagonal, and the  $m$ distinct eigenvalues $\mu_1 >\mu_2 > \ldots  >  \mu_m$ of  $\D$ -- called Ritz values -- are good
approximations of the eigenvalues of $\A$ \cite{Saa1996}. One immediate consequence of the connection between PLS and Krylov spaces is the fact that the latent components span the Krylov space defined by $\K$ and $\K \y$. This implies that
\begin{eqnarray}
\label{eq:krylov}
\widehat \y_m&=& \mathcal{P}_{\mathcal{K}_m(\K,\K \y)} \y\,.
\end{eqnarray}

\subsection{SENSITIVITY ANALYSIS FOR  KPLS}
\label{subsec:sensitivity}

Sensitivity measures are crucial in at least two important scenarios. On the one hand, they are needed to select the correct model (in terms of  a suitable kernel and the number of
components) when using information criteria. On the other hand,  to assess the stability of the solution, one  needs to measure the influence of small noise in the
training points on the learned function. For example, areas with a
high sensitivity require further data points to stabilize the
solution in an ambiguous area. Furthermore, if for some regions, the prediction  does not depend on the training points at all, this
indicates that further data points are necessary.

Both of these questions --  model selection and stability analysis -- can be addressed by computing the derivatives
of the KPLS solution with respect to $\y$,  either of the fitted labels $\widehat \y_m$, or of the learned kernel coefficients $\widehat \balpha_m$. Let us consider the regression model
\begin{eqnarray}
\label{eq:reg}
Y_i&=& f(\x_i) + \varepsilon_i\,,\,\varepsilon_i \sim \mathcal{N}(0,\sigma^2)\,.
\end{eqnarray}
For a general regression method with fitted values $\widehat \y$, the degrees of freedom are defined as \cite{Ye9801,Efr2004}
\begin{eqnarray*}
\dof&=& E\left[\trace\left( \partial \widehat \y/\partial \y\right)   \right]
\end{eqnarray*}
with the expectation $E$ taken with respect to $Y_1,\ldots,Y_n$. An unbiased plug-in estimate of the degrees of freedom is therefore given by
\begin{eqnarray}
\label{eq:dof}
  \widehat {\dof} &=& \trace\left(
    \partial\widehat \y/\partial \y
  \right).
\end{eqnarray}
Degrees of freedom in combination with information criteria  can be used for model selection. As the KPLS solution depends nonlinearly on $\y$, the computation of the derivative is necessary. Kr\"amer \& Braun \cite{KraBra2007} derive an algorithm for the derivative of $\widehat \y_m$ by transforming the Lanczos decomposition (\ref{eq:bi}) into a Kernel representation and by exploiting its sparsity. The resulting iterative algorithm for (\ref{eq:dof}) is then used successfully for model selection. This method scales cubically in the number of examples.

For the construction of confidence intervals for a fitted kernel function
\begin{eqnarray*}
\widehat f(\x)&=& \sum_{i=1} ^n \widehat \alpha_{i} k(\x,\x_i)\,.
\end{eqnarray*}
one needs to study the influence of an infinitesimal perturbation in
the values of $\y$. If the kernel coefficients depended linearly on $\y$ via $\widehat \balpha= \H \y$, the distribution of the prediction $\widehat f(\x)$ at any point $\x$ would be given by
\begin{eqnarray}
\label{eq:varf}
\widehat f(\x)&\sim& \mathcal{N}\left( \k(\x) ^\top E\left[\widehat \balpha \right],\sigma^2 \k(\x) ^\top \H  \H^\top   \k(\x)\right)
\end{eqnarray}
with $\k(\x)=\left(k(\x,\x_1),\ldots,k(\x,\x_n)\right) \in \mathbb{R}^n$. However, as KPLS depends nonlinearly on $\y$, the distribution of $\widehat \balpha_m$ can only be determined approximately by using a first order Taylor expansion, i.e. one uses
\begin{eqnarray}
\label{eq:Hm}
\H_m&\approx& \left( \partial \widehat \balpha_m/\partial \y \right)\,.
\end{eqnarray}
To the best of our knowledge, confidence intervals for PLS have only been constructed in the linear setting, but the results can easily been extended to the Kernel case.
Phatak et.~al. \cite{PhaRilPen2002} use (\ref{eq:krylov}) to explicitly calculate the
derivative of the PLS coefficients $\widehat \bbeta_m$, and obtain an approximate distribution of $\widehat \bbeta_m$.  As the formula depends on matrix multiplications of order $(nm) \times (nm)$, this approach is computationally expensive. Furthermore, as the Krylov sequence $\K \y,\ldots,\K^m \y$ is nearly collinear, the formula is numerically unstable. In \cite{Den1997,SerLem2004}, an iterative formulation of PLS is used to construct the  derivative of $\widehat \bbeta_m$. Finally, we remark that the approach by \cite{KraBra2007} using the Lanczos decomposition can be extended to the derivative of the kernel coefficients.

The drawback of all of these approaches is their poor scalability. All
algorithms are cubic in the number of observations. In the following two sections we exploit that we do not need the derivative itself, but  only  the  trace of the derivative for
the degrees of freedom, and (\ref{eq:varf}) for the construction of
confidence intervals. The key advantage is that we can compute these approximation schemes in quadratic runtime.
\section{APPROXIMATE DEGREES OF FREEDOM IN QUADRATIC RUNTIME}
\label{sec:DoF}
The key ingredients for the derivation of approximate degrees of freedom are (1) the identification of those terms that are cubic in $n$, and (2) the approximation of those terms using Lanczos approximations.

First, we extend the results of  \cite{PhaRilPen2002} to the computation of the derivative of $\widehat \y_m$. We define the $m \times m$ matrix $\B$ via $b_{ij}= \left \langle \t_i,\K^j \y\right \rangle\,.$ The matrix is regular and upper triangular, as the latent components $\T$ are an orthogonal basis of the Krylov subspace $\mathcal{K}_m(\K,\K\y)$.
\begin{propo}
\label{pro:der}
Let $\c= \B^{-1} \T^\top \y$ and  $\V=(\v_1,\ldots,\v_m )= \T \B^{- \top}$.  We have
\begin{eqnarray*}
 \frac{\partial \widehat \y_m}{\partial \y}&=& \left[\c^\top \otimes \left(\I_n -
\mathcal{P}_{\T}\right)\right] \Q^\top + \left[ \V \otimes \left(\y - \widehat \y_m \right)^\top
\right] \Q^\top + \mathcal{P}_{\T}\\
&=& \sum_{j=1} ^m c_j \left(\I_n -
\T \T ^\top \right) \K^j + \sum_{j=1} ^m \v_j \left(\y - \widehat \y_m\right)^\top \K^j + \T \T^\top\,.
\end{eqnarray*}
Here, $\otimes$ is the Kronecker product and $\Q= \left(\K,\K^2,\ldots,\K^m\right) \in \mathbb{R}^{n \times nm}$.
\end{propo}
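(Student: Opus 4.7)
The plan is to recognize that both $\T$ and the Krylov matrix $\M = (\K\y,\K^2\y,\ldots,\K^m\y)$ span the same Krylov subspace $\mathcal{K}_m(\K,\K\y)$. This gives two equivalent representations of $\widehat \y_m$ as an orthogonal projection, and the representation via $\M$ is the one whose derivative is tractable because the columns of $\M$ depend on $\y$ in a transparent polynomial way.

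First I would establish the dictionary between the two bases. Since $\T$ is orthonormal and $b_{ij}=\langle \t_i,\K^j\y\rangle$, we have $\B=\T^\top\M$ and hence $\M=\T\T^\top\M=\T\B$. Therefore $\M^\top\M=\B^\top\B$, so $\M(\M^\top\M)^{-1}=\T\B^{-\top}=\V$, and the two projectors agree: $\mathcal{P}_{\T}=\T\T^\top=\M(\M^\top\M)^{-1}\M^\top=\mathcal{P}_{\M}$. It follows that $\widehat\y_m=\M\c$ with $\c=\B^{-1}\T^\top\y=(\M^\top\M)^{-1}\M^\top\y$, matching the definition of $\c$ given in the proposition.

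Next, rather than differentiating $\T$ or $\B$ directly (both depend on $\y$ in a complicated way), I would differentiate the normal equation $\M^\top(\y-\M\c)=0$. Writing $\r=\y-\widehat\y_m$ and applying $d(\cdot)$ yields
\begin{equation*}
(d\M)^\top \r + \M^\top d\y - \M^\top (d\M)\c - \M^\top\M\, d\c = 0,
\end{equation*}
so $d\c=(\M^\top\M)^{-1}\bigl[(d\M)^\top \r + \M^\top d\y - \M^\top(d\M)\c\bigr]$. Substituting into $d\widehat\y_m=(d\M)\c+\M\,d\c$ and using $\M(\M^\top\M)^{-1}\M^\top=\mathcal{P}_{\T}$ and $\M(\M^\top\M)^{-1}=\V$ collapses the expression to
\begin{equation*}
d\widehat\y_m \;=\; (\I_n-\mathcal{P}_{\T})(d\M)\c \;+\; \mathcal{P}_{\T}\,d\y \;+\; \V (d\M)^\top \r.
\end{equation*}

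The last step is to evaluate $d\M$ explicitly. Since the $j$-th column of $\M$ is $\K^j\y$, we have $d\M=(\K\,d\y,\K^2\,d\y,\ldots,\K^m\,d\y)$. Thus $(d\M)\c=\sum_{j=1}^m c_j\K^j d\y$, and the $j$-th entry of $(d\M)^\top\r$ is $\r^\top\K^j d\y=(\y-\widehat\y_m)^\top \K^j d\y$, giving $\V(d\M)^\top\r=\sum_{j=1}^m \v_j(\y-\widehat\y_m)^\top\K^j d\y$. Reading off the coefficient of $d\y$ yields the second (summation) form in the proposition. The Kronecker product form is then immediate from the identifications $\sum_{j=1}^m c_j\K^j=(\c^\top\otimes\I_n)\Q^\top$ and $\sum_{j=1}^m \v_j(\y-\widehat\y_m)^\top\K^j=\bigl[\V\otimes(\y-\widehat\y_m)^\top\bigr]\Q^\top$, both of which follow from the block structure of $\Q=(\K,\K^2,\ldots,\K^m)$ and the symmetry of $\K$.

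The main obstacle is bookkeeping: one must avoid the temptation to differentiate $\T$ or $\B$ (which would force an explicit handle on the Lanczos recursion), and instead exploit the optimality condition so that the $\y$-dependence of the basis cancels out, leaving a formula entirely in terms of the fixed kernel powers $\K^j$ and the already-computed quantities $\c$, $\V$, $\T$, and the residual $\y-\widehat\y_m$.
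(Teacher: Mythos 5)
Your proof is correct and follows essentially the same route as the paper: differentiating the projection onto the Krylov space $\mathcal{K}_m(\K,\K\y)$ (whose basis $\K\y,\ldots,\K^m\y$ depends on $\y$ transparently) and then changing basis via $\M=\T\B$ to the orthogonal components, which is exactly the argument the authors sketch. Your write-up via the normal equations $\M^\top(\y-\M\c)=0$ is just a clean way of organizing that same differentiation, and all the identifications ($\V=\M(\M^\top\M)^{-1}$, $\mathcal{P}_{\T}=\mathcal{P}_{\M}$, the block reading of the Kronecker products) check out.
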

\begin{proof}
The first line follows by computing the derivative of the projection operator (\ref{eq:krylov}) and by applying a basis transformation from  the Krylov sequence $\K \y,\ldots, \K^m \y$ to the orthogonal basis $\t_1,\ldots,\t_m$. The latter ensures that the formula is numerically more stable. For the second line, we represent the Kronecker product as a sum.
\end{proof}
As a consequence, we yield a formula for the degrees of freedom of KPLS.
\begin{coro}
An unbiased estimated of the degrees of freedom of KPLS with $m$ components is given by
\begin{eqnarray*}
\label{eq:DoF}
\nonumber\widehat {\operatorname{DoF}}(m)&=& \sum_{j=1} ^m c_j
\operatorname{trace}\left[\left(\I_n - \T \T ^\top \right) \K^j\right] +
\sum_{j=1} ^m \left(\y - \widehat \y_m\right) ^\top \K^j \v_j +m\\
&=& \sum_{j=1} ^m c_j \operatorname{trace}\left(\K^j\right) + m \\
&&- \sum_{j=1} ^m  \left(\sum_{l=1} ^m \t_l ´^ \top \K^j \t_l \right)+ \left(\y - \widehat \y_m\right)^\top \sum_{j=1} ^m   \K^j \v_j
\end{eqnarray*}
\end{coro}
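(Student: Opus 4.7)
The plan is to start from the closed form for $\partial\widehat\y_m/\partial\y$ given in Proposition \ref{pro:der} (second line, the sum representation), apply the trace operator, and simplify each of the three resulting sums using linearity and the cyclic property of the trace. Unbiasedness is immediate from definition (\ref{eq:dof}), so the content of the corollary is purely the algebraic identity.

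First I would split the trace by linearity into three pieces corresponding to the three terms of the derivative. For the projection term $\T\T^\top$, I use that the latent components are scaled to unit norm and mutually orthogonal, so $\T^\top\T=\I_m$ and hence $\trace(\T\T^\top)=\trace(\T^\top\T)=m$. This gives the additive constant $m$ in the displayed formula.

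Next I would handle the first sum $\sum_{j=1}^m c_j(\I_n-\T\T^\top)\K^j$. By linearity and $\trace(\T\T^\top\K^j)=\trace(\T^\top\K^j\T)=\sum_{l=1}^m \t_l^\top\K^j\t_l$, each summand reduces to $c_j\trace(\K^j)-c_j\sum_{l=1}^m\t_l^\top\K^j\t_l$, producing the terms $\sum_j c_j\trace(\K^j)$ and $-\sum_j c_j\sum_l \t_l^\top\K^j\t_l$ (note that the written expansion in the statement appears to be missing the $c_j$ factor on the middle sum; this is a transcription slip that the calculation immediately fixes).

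Finally, the second sum $\sum_j \v_j(\y-\widehat\y_m)^\top\K^j$ is a sum of rank-one matrices of the form $\u\w^\top$ with $\u=\v_j$ and $\w^\top=(\y-\widehat\y_m)^\top\K^j$. The trace of each such matrix is the scalar $\w^\top\u=(\y-\widehat\y_m)^\top\K^j\v_j$, so summing over $j$ yields $(\y-\widehat\y_m)^\top\sum_{j=1}^m\K^j\v_j$. Collecting the three contributions produces exactly the formula of the corollary. There is no real obstacle here: the only subtlety is remembering that $\T$ has orthonormal columns (giving $\trace(\T\T^\top)=m$) and being careful with the trace-of-rank-one identity in the second sum; everything else is linearity and the cyclic property of the trace.
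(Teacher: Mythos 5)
Your proof is correct and follows exactly the route the paper intends: the corollary is obtained by applying the trace to the second representation in Proposition \ref{pro:der}, using $\trace(\T\T^\top)=m$, the cyclic property for the $(\I_n-\T\T^\top)\K^j$ terms, and the rank-one trace identity for the $\v_j(\y-\widehat\y_m)^\top\K^j$ terms. Your observation that the factor $c_j$ is missing from the middle sum $-\sum_{j}\sum_{l}\t_l^\top\K^j\t_l$ in the displayed second line is also correct --- that is a typo in the statement, and your derivation gives the right coefficient.
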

This representation of the DoF of KPLS reveals an interesting feature. The computation of  last line is quadratic in $n$, as it only involves matrix-vector multiplications. The first line however is cubic in $n$, as we need to compute the trace of powers of the kernel matrix $\K^j$ for $j=1,\ldots,m$.
\subsection{APPROXIMATE DEGREES OF FREEDOM VIA RITZ VALUES}
As explained above, PLS is equivalent to Lanczos approximations, and can be used to approximate the eigenvalues of $\X^\top \X$ via the tridiagonal matrix $\D$ defined in (\ref{eq:D}). Note that $\D$ has a kernel representation
\begin{eqnarray}
\label{eq:D2}
\D&=& \R ^\top \K^2 \R= \L^\top \L\,.
\end{eqnarray}
with $\R$ the matrix of normalized residuals defined in (\ref{eq:res}). The eigenvalues of $\D$ are called Ritz values and constitute approximations of the eigenvalues of $\X^\top \X$ \cite{Saa1996}. The quality of the approximation increases with the number $m$ of latent components, and as the computation of the Ritz values scales cubically only in $m$, an efficient strategy is to allow a generous amount of components for the computation of $\D$.

As the eigenvalues of $\X^\top \X$ correspond to the eigenvalues of $\K$ in the kernel setting, we can use Ritz values to derive an approximation of the trace of $\K^j$.
\begin{defi}[Approximate Degrees of Freedom] We define the approximate degrees of freedom of KPLS with $m$ components as
\begin{eqnarray*}
\widehat {\operatorname{DoF}}_{\operatorname{appr}}(m)&=& \sum_{j=1} ^m c_j \operatorname{trace}\left(\D_{m_{\max}} ^j\right) + m\\
&& -\sum_{j=1} ^m  \left( \sum_{l=1} ^m \t_l ´^ \top \K^j \t_l\right) +  \left(\y - \widehat \y_m\right) ^\top \sum_{j=1} ^m \K^j \v_j \,,
\end{eqnarray*}
where $\D_{m_{\max}}$ is the tridiagonal matrix defined in (\ref{eq:D2}) computed with $m_{\max} \geq m$ latent components.
\end{defi}
The computation of $\D$ only requires one additional $m_{\max} \times m_{\max}$ matrix multiplication $\L ^\top \L$. As the matrix $\D$ is of size $m_{\max} \times m_{\max}$, the runtime for the computation is cubic in the number of maximal components $m_{\max}$ (which is typically small), and quadratic in the number $n$ of examples.

\subsection{QUALITY OF THE APPROXIMATION}
Theoretically, the validity  of this approximation can be justified in terms of a deviation bound.
\begin{propo}[Saad \cite{Saa1996}]
\label{pro:Saad}
Denote by $\mu_1,\ldots,\mu_m$ the eigenvalues of
$\D$ and by $\lambda_1 \geq \ldots \geq  \lambda_n$ the eigenvalues of the Kernel matrix $\K$. We have
\begin{displaymath}
0 \leq \lambda_i -\mu_i \leq \left(\lambda_1-\lambda_n\right)
\left(\frac{\kappa_i \tan \theta_i}{C_{m-i}(1+2 \gamma_i)}
\right)^2
\end{displaymath}
with $\u_i$ the $i$th eigenvector of $\K$,
\begin{eqnarray*}
\theta_i&=& \operatorname{acos} \frac{\left \langle \y, \sqrt{\lambda_i} \u_i \right \rangle}{\|y\|_K}\,
\end{eqnarray*}
and
\begin{displaymath}
\kappa_i=\prod_{j=1} ^{i-1} \frac{\mu_j- \lambda_n}{\mu_j -
\lambda_i}\,\quad\gamma_i=
\frac{\lambda_{i}-\lambda_{i-1}}{\lambda_{i+1}-\lambda_n}\,.
\end{displaymath}
\end{propo}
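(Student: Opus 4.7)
The plan is to handle the two inequalities separately. The lower bound $\mu_i \leq \lambda_i$ follows from Cauchy interlacing applied to the Rayleigh--Ritz procedure: by the Courant--Fischer characterization
\begin{displaymath}
\mu_i \;=\; \max_{\substack{S\subset\mathcal{K}_m(\K,\K\y)\\ \dim S = i}}\;\min_{0\ne v\in S}\frac{v\trp \K v}{v\trp v},
\end{displaymath}
restricting the maximization to subspaces of the Krylov subspace can only decrease the value compared to the analogous expression over all $i$-dimensional subspaces of $\mathbb{R}^n$, which equals $\lambda_i$.

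For the upper bound I would proceed by polynomial approximation inside the Krylov space. Every element of $\mathcal{K}_m(\K,\K\y)$ has the form $q(\K)\y$ for some polynomial $q$ of degree at most $m$ with $q(0)=0$, so the Rayleigh quotient becomes a polynomial optimization. I would insert the test polynomial
\begin{displaymath}
p_i(\lambda) \;=\; \Bigl(\prod_{j=1}^{i-1}\frac{\lambda-\lambda_j}{\lambda_i-\lambda_j}\Bigr)\,\frac{C_{m-i}\bigl(1+2(\lambda-\lambda_{i+1})/(\lambda_{i+1}-\lambda_n)\bigr)}{C_{m-i}(1+2\gamma_i)},
\end{displaymath}
where the Chebyshev factor is minimax on $[\lambda_n,\lambda_{i+1}]$ under the normalization $p_i(\lambda_i)=1$, and the leading product deflates $\lambda_1,\ldots,\lambda_{i-1}$. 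Expanding $\y=\sum_j\beta_j\u_j$ in the eigenbasis of $\K$, the deflation kills the contributions from $\u_1,\ldots,\u_{i-1}$ and the Chebyshev factor bounds the remaining contributions from $\u_{i+1},\ldots,\u_n$ uniformly by $1/C_{m-i}(1+2\gamma_i)$. Rearranging the resulting Rayleigh-quotient inequality yields the claimed upper bound, with the ratio of the non-$\u_i$ mass of $\y$ to its $\u_i$ component in the $K$-inner product supplying exactly $\tan^2\theta_i$ by virtue of the definition $\cos\theta_i=\langle\y,\sqrt{\lambda_i}\u_i\rangle/\|\y\|_K$.

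The main obstacle I anticipate is producing $\kappa_i$ in the sharp form stated, with the already-computed Ritz values $\mu_j$ in the numerator rather than the true $\lambda_j$. A naive application of the polynomial above gives the cruder factor $\prod_{j<i}(\lambda_j-\lambda_n)/(\lambda_j-\lambda_i)$; obtaining the tighter form requires an inductive refinement in which the deflation points are updated to the previously-computed Ritz values at each step. This is the technical bookkeeping carried out in \cite{Saa1996}, and since the proposition is quoted directly from that monograph, I would follow Saad's derivation for the final refinement rather than reinvent it.
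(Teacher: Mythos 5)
The paper gives no proof of this proposition --- it is imported verbatim from Saad's monograph --- and your outline is precisely the standard Kaniel--Paige--Saad argument used there: Courant--Fischer/interlacing on the compression to the Krylov subspace for $0\le\lambda_i-\mu_i$, and for the upper bound a test polynomial that pairs a deflating product over the leading eigenvalues with the minimax Chebyshev factor on $[\lambda_n,\lambda_{i+1}]$, with the sharpened $\kappa_i$ (Ritz values $\mu_j$ in the numerator rather than $\lambda_j$) obtained by the inductive refinement you correctly identify and legitimately defer to the cited source. One incidental observation: your Chebyshev argument implicitly uses $\gamma_i=(\lambda_i-\lambda_{i+1})/(\lambda_{i+1}-\lambda_n)$, which is the correct standard form, whereas the paper's $\gamma_i=(\lambda_{i}-\lambda_{i-1})/(\lambda_{i+1}-\lambda_n)$ is negative under the stated ordering $\lambda_1\ge\ldots\ge\lambda_n$ and is evidently an index typo.
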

Here, $C_l$ denotes the Chebychev polynomial of order $l$.

Note that $\theta_i$ is the angle between $\b$ and the $i$th eigenvector of $\X ^\top \X$ - computed in feature space.  This inequality
implies that the approximation for the $i$th eigenvalue is good under two different scenarios. Either $\lambda_i$ is already close to zero, so $ \mu_i \leq \lambda_i$  is close to zero as well. For large eigenvalues $\lambda_i$, the approximation is good if (a)
the eigenvalues of $\K$ decay fast, (b) the angle $\theta_i$ corresponding to the $i$th eigenvector is small, and (c) the index $i$ is not too large compared to $m$. Property (a) is a feature of rbf-kernels, which we use throughout the rest of the paper.  Condition (b) is typically fulfilled for the leading eigenvectors of $\K$  \cite{BraBuhMul2007,Braun0801}, and condition (c)  can be fulfilled by using a sufficient large amount $m_{\max}$ of components.

In practical applications, two important issues are the quality of the approximate degrees of freedom, and the quality of the model selection criteria based on these approximate degrees of freedom. In accordance with \cite{KraBra2007}, we choose generalized minimum description length (gMDL) \cite{HanYu2001} as model selection criterion.

The simulation setting follows the regression model (\ref{eq:reg}) with $f(x)=\sinc(x)$. We draw $n=100$ inputs $X_i$ uniformly from
$[-\pi, \pi]$ and set the  standard deviation to $\sigma=0.1$. We fit KPLS with three different  rbf-kernels of width $0.01,0.1,1$ and use different numbers $m_{\max}$ of maximal components. In addition, we compute the DoF, the approximate DoF, the gMDL criterion, and gMDL based on the approximate DoF.

\begin{figure}[htb]
 \centering
  \subfigure
  {\includegraphics[width=0.2\textheight]{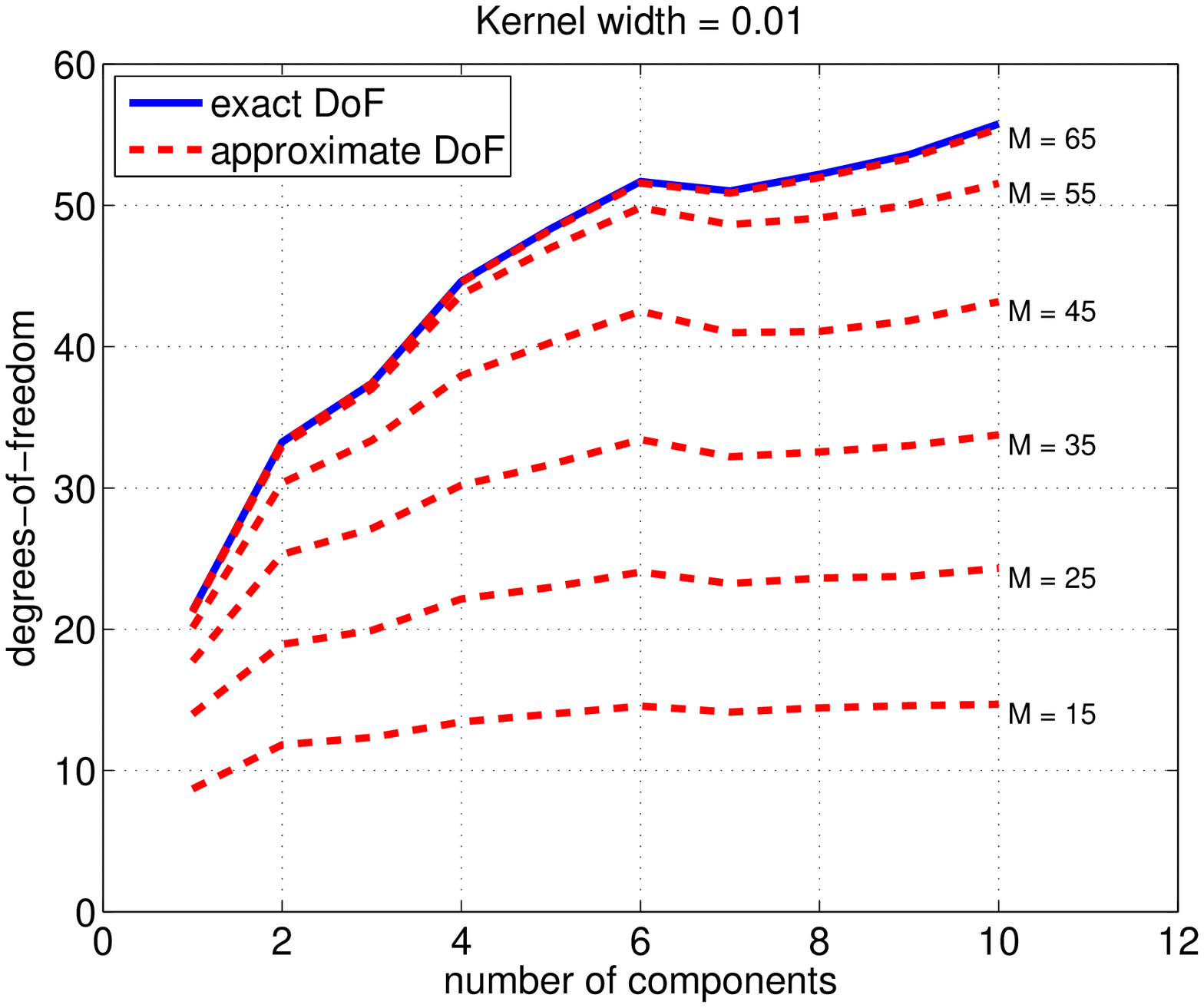}}
  \subfigure
  {\includegraphics[width=0.2\textheight]{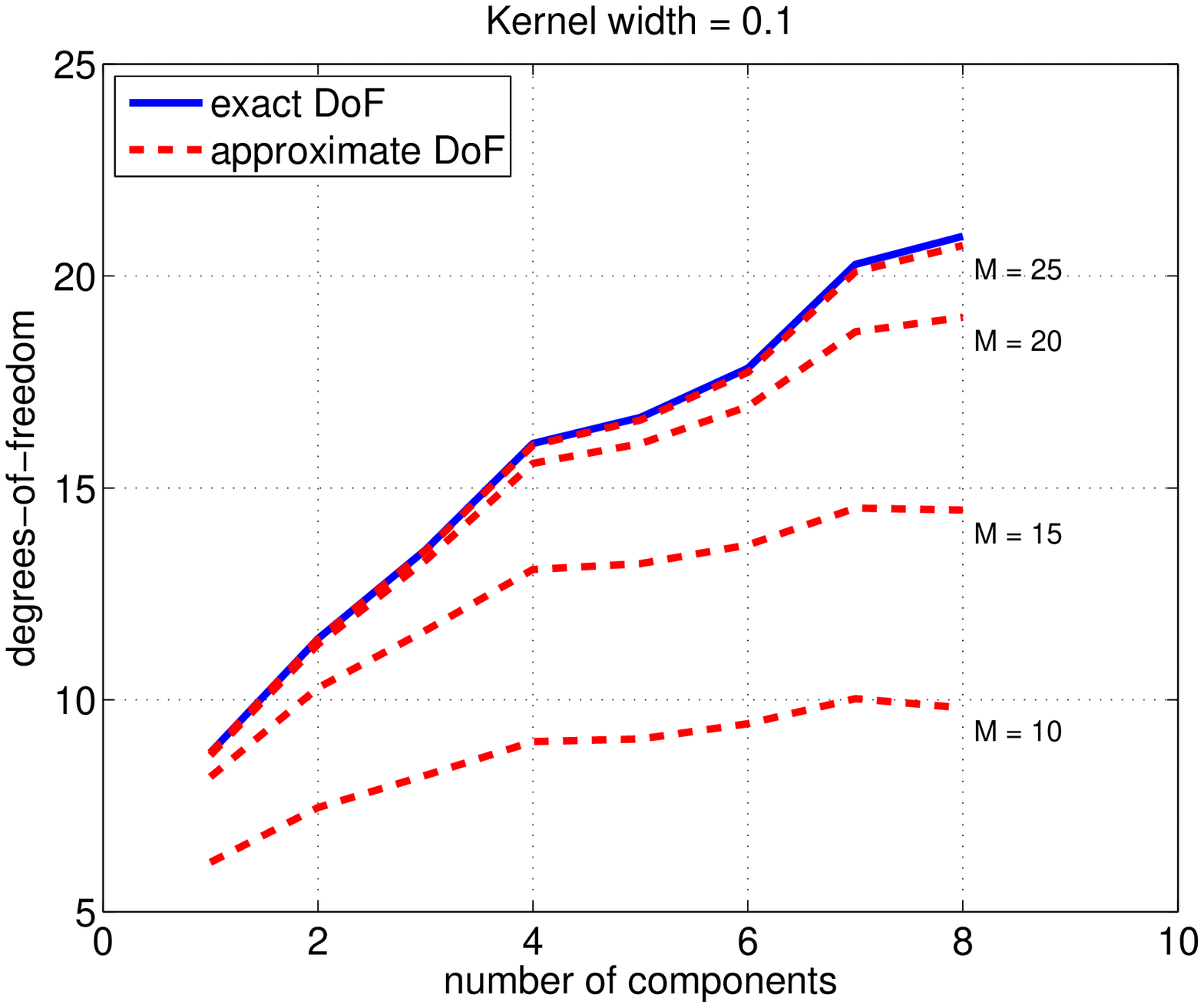}}
  \subfigure
  {\includegraphics[width=0.2\textheight]{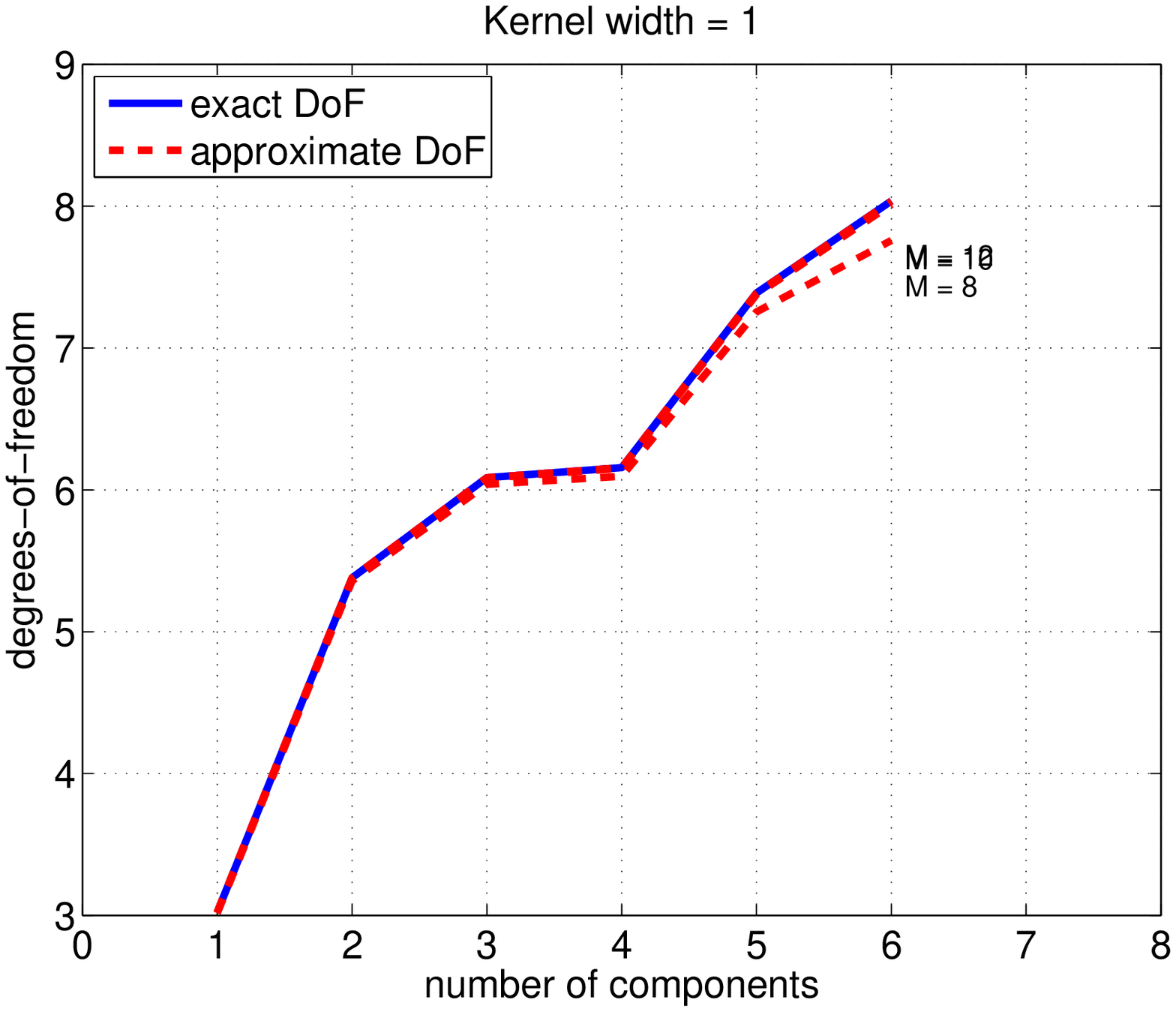}}\\
  \subfigure
  {\includegraphics[width=0.2\textheight]{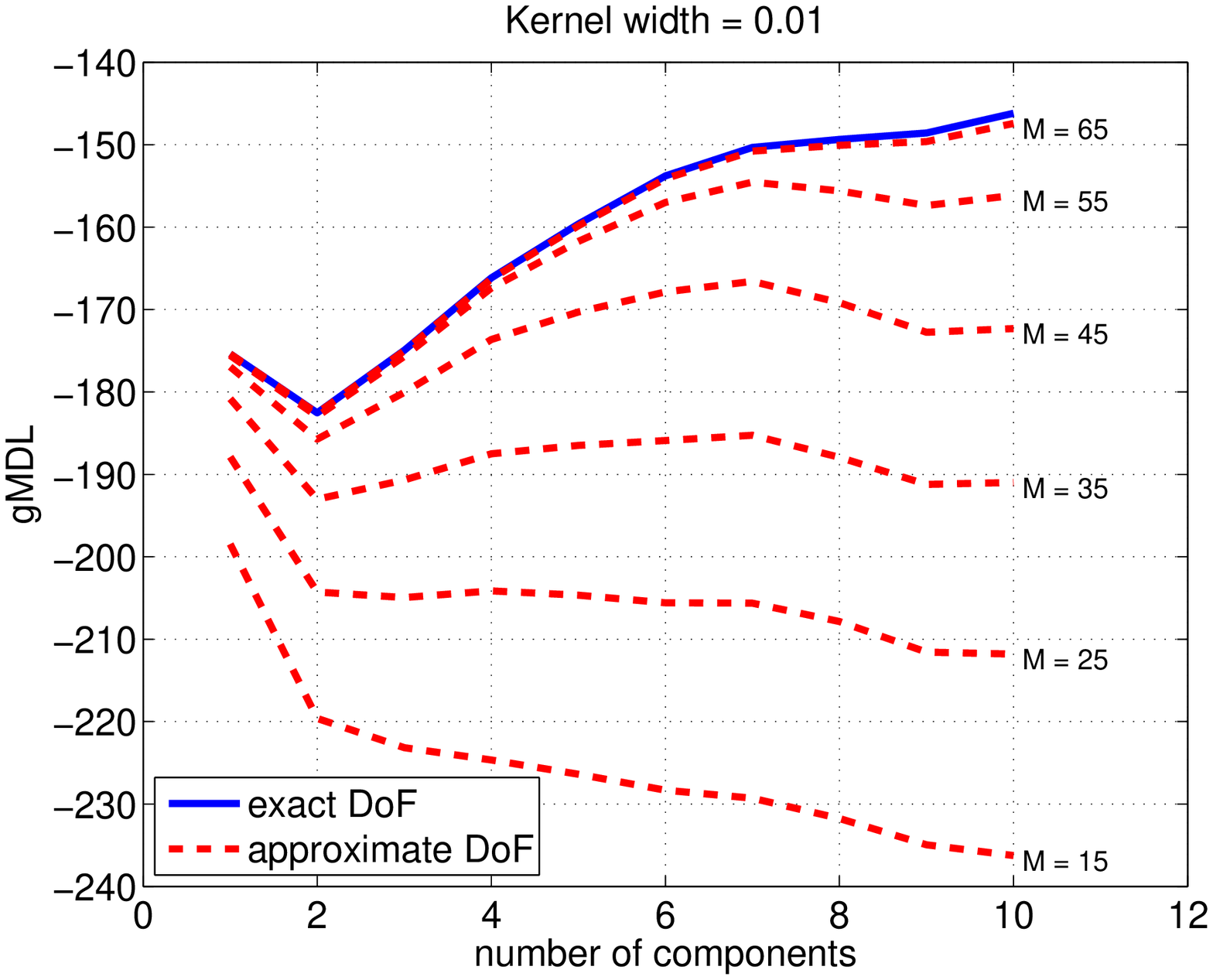}}
  \subfigure
  {\includegraphics[width=0.2\textheight]{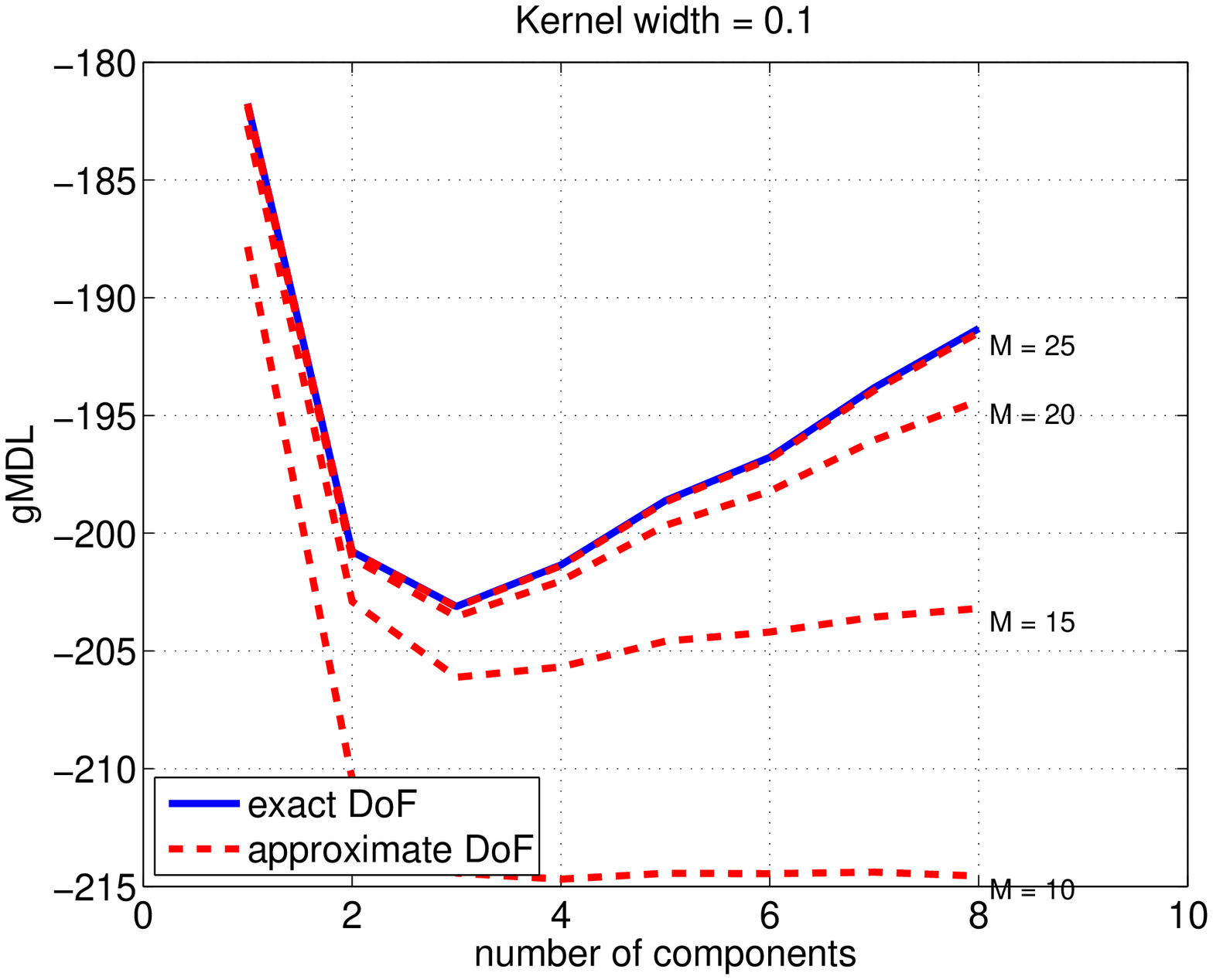}}
  \subfigure
  {\includegraphics[width=0.2\textheight]{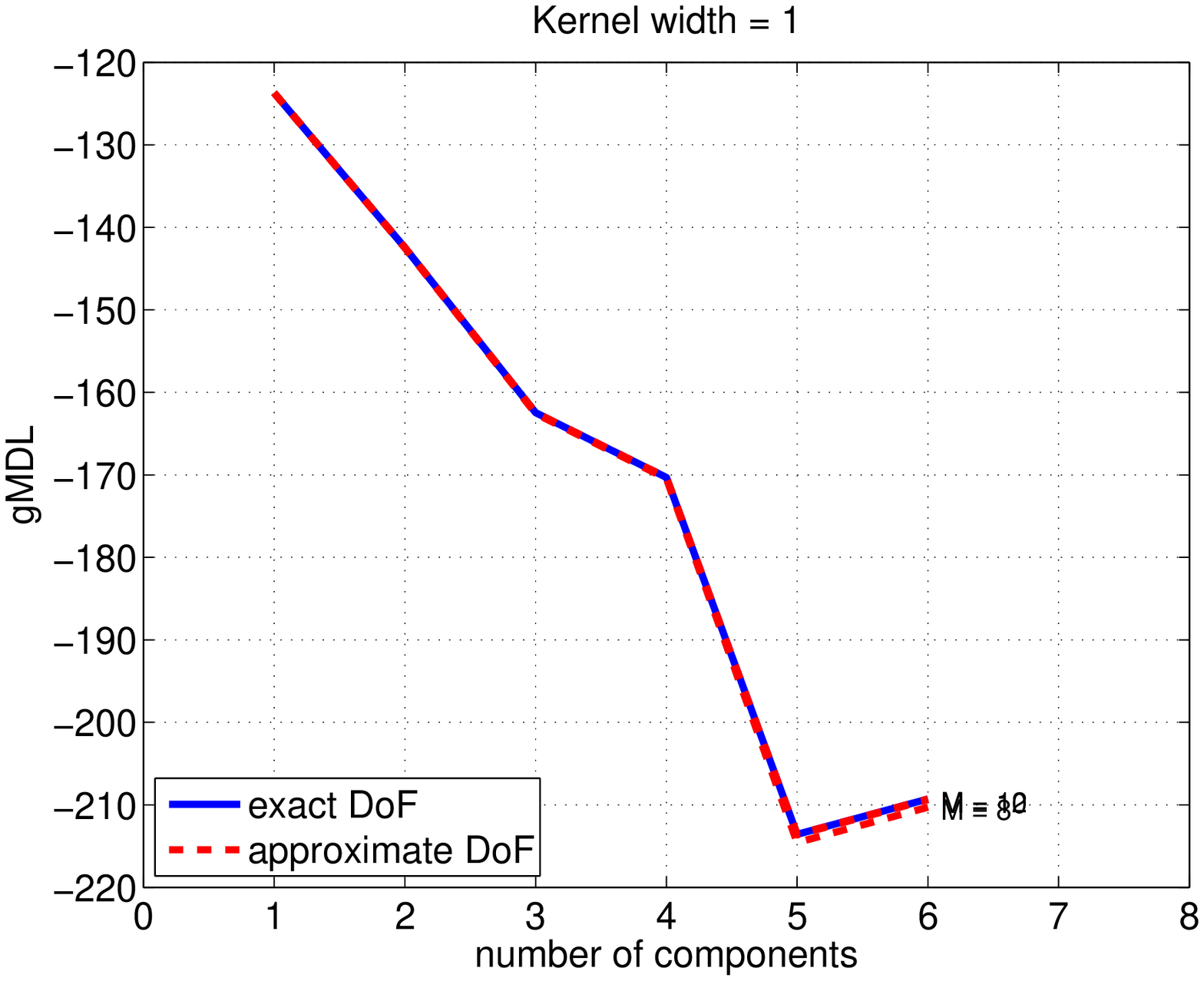}}
 \caption{Quality of the approximate degrees of freedom. Results for kernel widths $0.01$ (left), $0.1$ (center), and $1$ (right). Top row: DoF (blue line) and  approximate DoF (red dashed line) for different numbers of maximal components. Bottom row: gMDL (blue line) and  approximate gMDL (red dashed line) for different numbers of maximal components.}
 \label{fig:approx}
 \end{figure}

Figure \ref{fig:approx} displays the results for the different kernel widths $0.01$ (left), $0.1$ (center) and $1$ (right). The first row shows the degrees of freedom of KPLS (blue line) and approximate degrees of freedom of KPLS depending on the number of maximal components (red dashed line). As indicated by Proposition \ref{pro:Saad}, the approximation becomes more accurate if $m_{\max}$ is large. Furthermore, the approximation depends on the width of the rbf-kernel. For very small kernel widths (left), the eigenvalues of the kernel matrix decay very slowly, and more components are needed to compensate.
In the second line of Figure  \ref{fig:approx}, we display gMDL (blue line) and the approximate gMDL depending on  the number  of maximal components (red dashed line). The behavior of the approximation is qualitatively the same: It depends on the size of the kernel widths, and in general, it becomes more accurate if more components are used to compute $\D$.
\subsection{RUNTIME COMPARISON}
As shown above, the approximation of the Degrees of Freedom of KPLS leads to reduction in runtime from cubic to quadratic. We now illustrate that this leads to a considerable speed up even for medium sized data. We used the "kin" regression data set from the delve repository\footnote{\texttt{http://www.cs.toronto.edu/\textasciitilde delve/}}. This eight-dimensional synthetic data set is
based on a model of a robotic arm, and the task consists in predicting
the position of the arm based on the angles of its joints. It consists of 8192 data points.
\begin{figure}[ht]
\centering
 {\includegraphics[width=0.35\textheight]{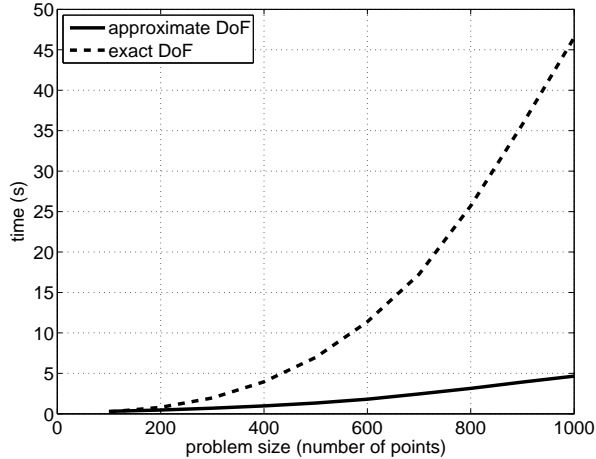}}
\caption{Comparison of runtime on the ''kin'' data set. Jagged line: KPLS with exact Degrees of Freedom for $m=10$ components. Solid line: KPLS with approximate Degrees of Freedom for $m=10$ components and $m_{max}=30$ components for the approximation of the eigenvalues of the kernel matrix. Hence, for the approximation, the effective number of  components is three times higher.}
\label{fig:runtimes}
\end{figure}
For sub-samples of size $100,200,\ldots,1000$, we compute (a) KPLS and its Degrees of Freedom for up to $m=10$ components and (b) KPLS and its approximate Degrees of Freedom for up to $m=10$ components. In both cases, we use a Gaussian Kernel. Note that for (b), we  compute $m_{max}=30$ components in order to obtain a close approximation, hence the number of KPLS iterations is three times higher for alternative (b). The runtime of both variants are displayed in Fig. \ref{fig:runtimes}. The gap between the two methods is clearly visible already for small sample sizes, and the two graphs show the expected quadratic versus cubic form. While the latter is an empirical illustration of the theoretical runtime analysis that we present above, it is important to stress that the improvement from $\mathcal{O}\left(n^3\right)$ to $\mathcal{O}\left(n^2\right)$ is not an asymptotic result but also leads to a significant improvement in runtime already for medium sized data.

\section{CONFIDENCE INTERVALS IN QUADRATIC RUNTIME}
\label{sec:error}
For the derivation of (approximate) confidence intervals (\ref{eq:varf}), we need to compute the quantity $\H_m \k(\x)$, where $\H_m \y$ is the first order Taylor approximation of the kernel coefficients $\widehat \balpha_m$. Using the representation from proposition \ref{pro:der}, we can directly
compute this matrix-vector product, even without approximating the
eigenvalues and thus compute the exact expression in quadratic runtime.

Note that Taylor expansions occur in both types of approximations, for the Degrees of Freedom as well as for the confidence intervals. However, there are essential conceptual differences. For the Degrees of Freedom, the representation in terms of derivatives (\ref{eq:dof}) is in fact no approximation but due to the assumption of normally distributed errors in (\ref{eq:reg}) which leads to Stein's Lemma \cite{Stein8101}. In this case, the Degrees of Freedom are approximated using Lanczos methods and Ritz values. In contrast, for the confidence intervals, we have to use the Taylor expansion (\ref{eq:Hm}) to obtain an approximate distribution (\ref{eq:varf}) for the KPLS parameters. The computation of the Taylor expansion  $\H_m$ defined in (\ref{eq:Hm}) is cubic in $n$ as it involves multiplications of matrices of size $n \times n$. Here, we reduce the computational cost to $\mathcal{O}\left(n^2\right)$ by cleverly exploiting the fact that the matrix-vector product $\H_m \k(\x)$ is a sufficient statistic.

\begin{propo}
We have
\begin{eqnarray*}
\H_m ^\top \k(\x)&=& \sum_{j=1} ^m \K^{j-1} \left\{c_j \left(\I_n - \K \T \N \R^\top \right)\right. \\
&&+ \left.\K  \left(\y - \widehat \y_{m}\right) \u_j ^\top \right\} \k(\x)  + \T \N \R^\top \k(\x)\,.
\end{eqnarray*}
with $\R$ denoting the matrix of normalized residuals, $\N$ denoting the $m \times m$ diagonal matrix consisting of elements $ n_{ii}= 1/\|\K \r_i\|$ and
\begin{eqnarray*}
\U&=& \left(\u_1,\ldots,\u_m\right)= \R \N \B^{- \top}\,.
\end{eqnarray*}
\end{propo}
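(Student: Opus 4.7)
The strategy is to run the derivation of Proposition~\ref{pro:der} one step ``earlier'', producing $\widehat\balpha_m$ rather than $\widehat\y_m$. First I would note that, because $\K\widehat\balpha_m=\widehat\y_m$ and Proposition~\ref{pro:der} implicitly relies on the Krylov expansion $\widehat\y_m=\sum_{j=1}^{m}c_j\K^j\y$ with $\c=\B^{-1}\T^\top\y$, the kernel coefficients admit the parallel expansion
\[
\widehat\balpha_m \;=\; \sum_{j=1}^{m}c_j\,\K^{j-1}\y
\]
with the \emph{same} coefficients. Differentiating term-by-term via the product rule and then transposing gives
\[
\H_m^\top\k(\x) \;=\; \sum_{j=1}^{m}c_j\K^{j-1}\k(\x) \;+\; \sum_{j=1}^{m}\bigl(\y^\top\K^{j-1}\k(\x)\bigr)\,\nabla c_j,
\]
so that everything reduces to evaluating the gradients $\nabla c_j$ paired against the scalars $\y^\top\K^{j-1}\k(\x)$.

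To extract $\operatorname{Jac}(\c)$ I would reuse Proposition~\ref{pro:der} itself. Applying the product rule to the identity $\widehat\y_m=\sum_j c_j\K^j\y$ yields a second expression for $\partial\widehat\y_m/\partial\y$; equating it with the expression in Proposition~\ref{pro:der} and multiplying on the left by $\T^\top$ produces the clean matrix identity
\[
\B\operatorname{Jac}(\c) \;=\; \T^\top \;-\; \sum_{j=1}^{m}c_j\T^\top\K^j \;+\; \B^{-\top}M,
\]
where $M$ is the $m\times n$ matrix with rows $(\y-\widehat\y_m)^\top\K^j$. Transposing this identity and right-multiplying by $\Y_{K-1}^\top\k(\x)$ (where $\Y_{K-1}=(\y,\K\y,\dots,\K^{m-1}\y)$) expresses $\sum_j(\y^\top\K^{j-1}\k(\x))\nabla c_j$ in closed form in terms of $\V=\T\B^{-\top}$ and $(\B^\top\B)^{-1}$.

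The last and most delicate step is a change of basis from the orthonormal basis $\T$ of $\mathcal{K}_m(\K,\K\y)$ to the $\K$-orthonormal basis $\R$ of $\mathcal{K}_m(\K,\y)$, using the Lanczos identities $\K\R=\T\L$, $\D=\L^\top\L=\R^\top\K^2\R$, together with the column normalisation $\N_{ii}=1/\|\K\r_i\|$. Under this change, the pairing of $\V$ with $\Y_{K-1}^\top\k(\x)$ becomes the vector $\T\N\R^\top\k(\x)$ appearing in the statement, and the scalars produced by $(\B^\top\B)^{-1}\Y_{K-1}^\top\k(\x)$ become $\u_j^\top\k(\x)$ with $\U=\R\N\B^{-\top}$, the natural $\R$-analogue of $\V$. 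Pulling $\K^{j-1}$ out in front and repackaging the three resulting contributions into $c_j(\I_n-\K\T\N\R^\top)+\K(\y-\widehat\y_m)\u_j^\top$ inside the braces then yields the stated formula.

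The chief obstacle is that the tempting identity $\R\N\B=\Y_{K-1}$ that would make this basis change literal holds only for $m=1$; for $m\ge 2$ it fails because $\L$ is bidiagonal rather than diagonal. Consequently the two groupings (with $\V,(\B^\top\B)^{-1}$ versus with $\T\N\R^\top,\U$) are \emph{not} equal as $n\times n$ matrices; they agree only after the $\K^{j-1}$ prefactors are distributed across the sum, with the off-diagonal of $\L$ being absorbed into the correction term $\sum_j(\u_j^\top\k(\x))\K^j(\y-\widehat\y_m)$. Verifying this telescoping compatibility, which is made tractable by the sparsity of $\L$, is the main technical content of the proof.
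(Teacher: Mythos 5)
Your route is genuinely different from the paper's. The paper's proof is a single observation: since $\widehat\y_m=\K\widehat\balpha_m$, one obtains $\H_m$ by pulling a factor of $\K$ out of each term of the formula for $\partial\widehat\y_m/\partial\y$ in Proposition~\ref{pro:der} --- concretely $c_j\K^j=\K\cdot c_j\K^{j-1}$, $\T\T^\top=\K\R\N\T^\top$ and $\v_j=\K\u_j$ --- and then applies the resulting matrix to $\k(\x)$. You instead re-derive the object from the Krylov expansion $\widehat\balpha_m=\sum_{j}c_j\K^{j-1}\y$ and an explicit computation of the Jacobian of $\c$. Those first steps are sound (modulo injectivity of $\K$ on $\mathcal{K}_m(\K,\y)$, which both arguments implicitly need), and your identity $\B\operatorname{Jac}(\c)=\T^\top-\sum_j c_j\T^\top\K^j+\B^{-\top}M$ does check out, using $(\K\y,\ldots,\K^m\y)=\T\B$ and $\T^\top\V=\B^{-\top}$.

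The genuine gap is the final step. Writing $\Y_{0}=(\y,\K\y,\ldots,\K^{m-1}\y)$, reassembling your expression into the stated one requires exactly $\V\Y_{0}^\top=\T\N\R^\top$ and $\Y_{0}(\B^\top\B)^{-1}=\U$, and both of these are equivalent to the single identity $\K\R\N=\T$ (equivalently $\R\N\B=\Y_{0}$) --- the very identity that powers the paper's one-line cancellation of $\K$. You correctly observe that this identity is in tension with the bidiagonal relation $\K\R=\T\L$, but you then assert, with no supporting computation, that the discrepancy is absorbed by a ``telescoping compatibility'' once the $\K^{j-1}$ prefactors are distributed. That assertion is the entire nontrivial content of the proof and is left unverified: the difference between your grouping and the stated one is $\bigl(\I_n-\sum_j c_j\K^j\bigr)\bDelta+\sum_j\K^j(\y-\widehat\y_m)\delta_j^\top$ with $\bDelta=\T\bigl(\B^{-\top}\Y_{0}^\top-\N\R^\top\bigr)$ and $\delta_j^\top$ the $j$th row of $(\B^\top\B)^{-1}\Y_{0}^\top-\U^\top$, and nothing in your sketch shows that this matrix vanishes; its first summand does not even live in the low-rank space spanned by the $\K^j(\y-\widehat\y_m)$ that you propose to absorb it into. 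To close the argument you must either establish $\K\R\N=\T$ under the paper's normalization conventions (which is what the paper's proof tacitly uses) or carry out the claimed telescoping explicitly using the sparsity of $\L$; as written, the proposal does not reach the stated formula.
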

\begin{proof}
As
\begin{eqnarray*}
\partial \widehat \y_m/\partial \y&=& \K \left(\partial \widehat \balpha_m/\partial \y\right)\,,
\end{eqnarray*}
the formula can be shown by ``canceling out'' $\K$ in the formula of the
 derivative of $\widehat y_m$, and then multiplying the formula with $\k(\x)$.
\end{proof}
\paragraph{Illustration}
Again, we use the regression model (\ref{eq:reg}), with
$f(x)=(x-1)(x+2)(x-1.5)\exp(-x^2/10)$ and $\sigma=1$. We draw $n=40$
points $X_i$ from a mixture of two normal distribution with mean $-2$ and $3$ a variance of $1$ in both cases. We fit KPLS with for two different models, (1) KPLS with
$15$ components and an rbf-kernel of width $0.1$ and (2) KPLS with $9$
components and an rbf kernel of width $1$.
\begin{figure*}
  \centering \subfigure
  {\includegraphics[width=0.3\textheight]{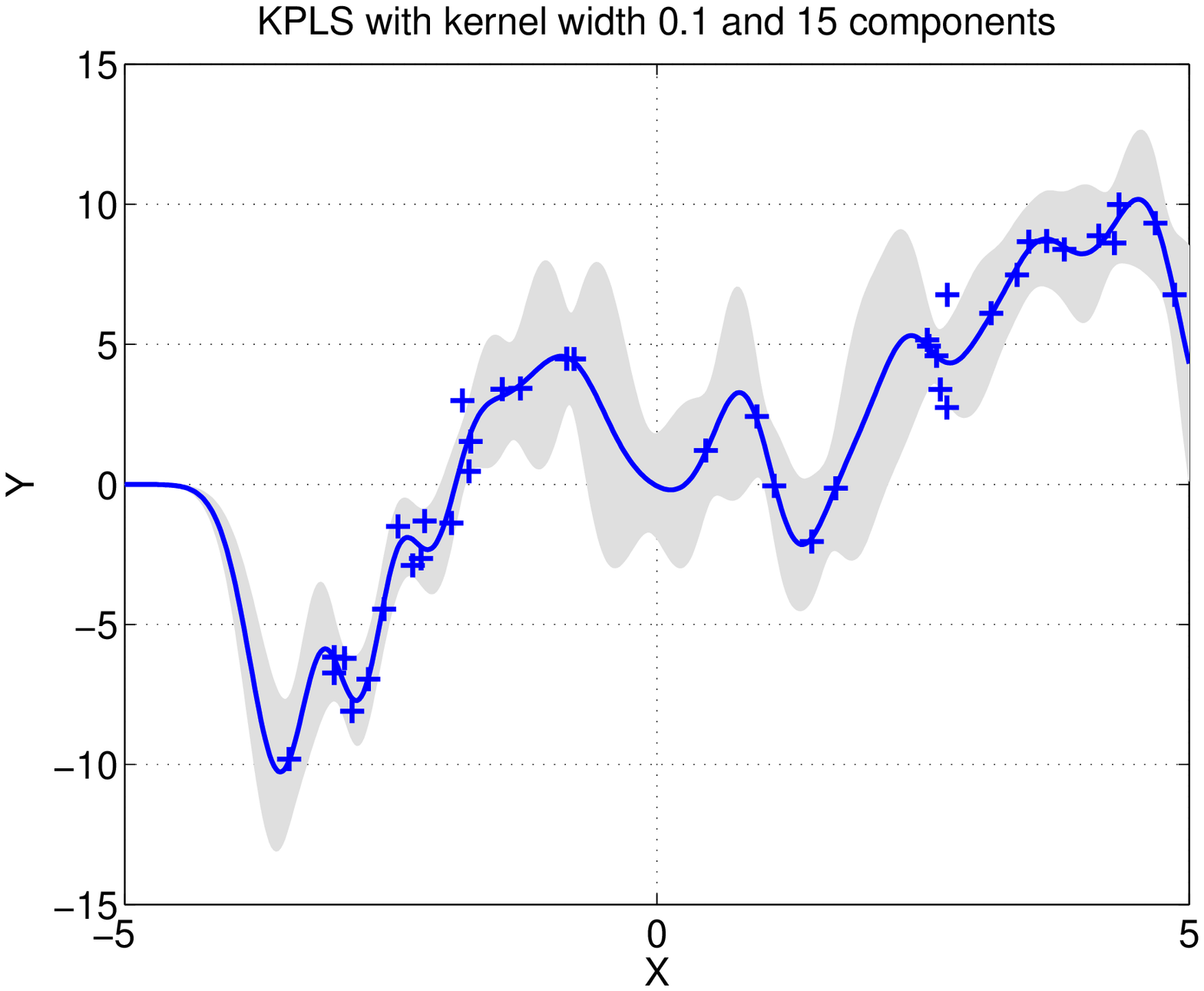}}
  \subfigure
  {\includegraphics[width=0.3\textheight]{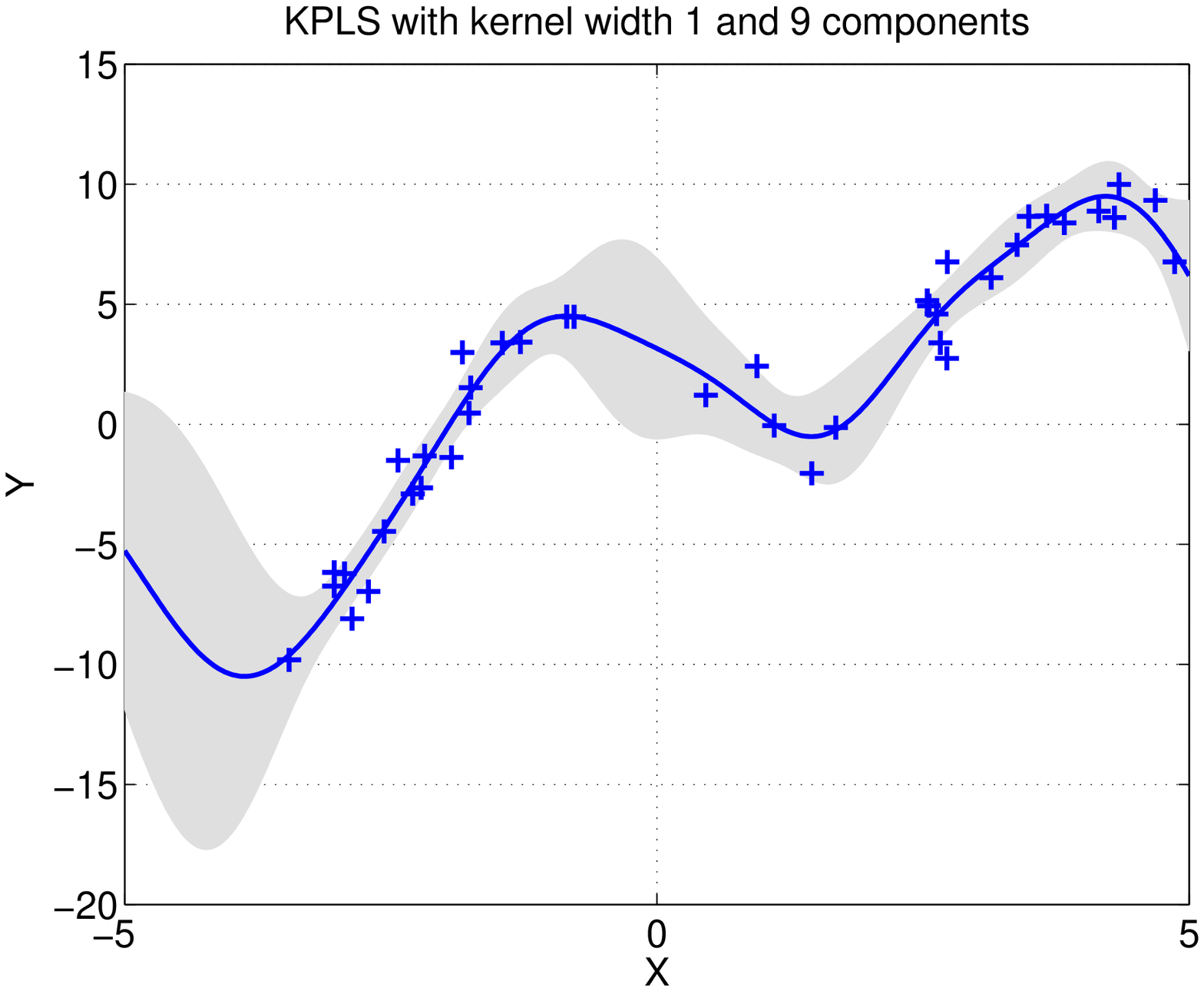}}

 \caption{Confidence Intervals for KPLS. Left: KPLS with $15$ components and an rbf-kernel of width $0.1$ Right: KPLS with $9$ components and an rbf kernel of width $1$}
 \label{fig:error}
\end{figure*}
Figure \ref{fig:error} shows the KPLS fit and its confidence intervals
(based on a level of $98$\%) for the two models.

In areas with high data density, the prediction is quite stable with
small confidence intervals. Next to such high density areas, the
predictions becomes unstable, as they can depend quite sensitively on
the neighboring data. Finally, when one moves far away from the data
points, their influence decreases to zero. This is much more apparent
in the left plot with the small kernel widths.

\section{CONCLUSION}

We proposed an implementation of the Kernel PLS method which not
only computes the fit in quadratic time, but a degree-of-freedom
estimate and confidence intervals based on a sensitivity analysis,
which formerly required cubic runtime. The latter estimates can be
used, for example, for model selection, or to measure the local
stability of the learned function. The approximation schemes exploit
the fact that Kernel PLS can be extended to compute Lanczos type
approximations of the eigenvalues as well. Together with a novel
formula for computing the derivatives of the kernel parameters
$\balpha$, these approximations allow us to replace costly computation
of powers of the kernel matrix. In summary, one obtains a Kernel PLS
algorithm which also provides relevant additional information for
model selection and provide further insight into the complexity and
stability of the learned function.

Our results capitalize on the close connection between the dimensionality reduction technique PLS on the one hand and  Krylov methods and Lanczos approximations on the other hand. While the latter two
methods are commonly used in numerical linear algebra, their
benefits for data analysis have not yet been exploited sufficiently.
Only recently (e.g. \cite{Ong0401,FreiWaMahLan2006,IdeTsu2007}) they are utilized
explicitly in a machine learning framework. Recent research results on the correspondence of penalization techniques and preconditioning of linear systems \cite{KonWhi2008,KraBouTut2008a} further underpin the strong potential of these methods.
We strongly believe
that the interplay between numerical linear
algebra and machine learning will further stimulate the field of data analysis.

\paragraph{Acknowledgement} This work is funded in part by the FP7-ICT Programme of the European Community, under the PASCAL2 Network of Excellence, ICT-216886, by the BMBF grant FKZ 01-IS07007A (ReMind), by the MEXT Grant-in-Aid for Young Scientists (A), 20680007, and by the JFE 21st Century Foundation.

\bibliographystyle{plain}

\begin{thebibliography}{10}

\bibitem{BouStr2007}
A.-L. Boulesteix and K.~Strimmer.
\newblock Partial {L}east {S}quares: {A} {V}ersatile {T}ool for the {A}nalysis
  of {H}igh-{D}imensional {G}enomic {D}ata.
\newblock {\em Briefings in Bioinformatics}, 8(1):32--44, 2007.

\bibitem{BraBuhMul2007}
M.~L. Braun, J.~M. Buhmann, and K.-R. M{\"u}ller.
\newblock {D}enoising and {D}imension {R}eduction in {F}eature {S}pace.
\newblock {\em Advances in Neural Information Processing Systems}, 19:185--192,
  2007.

\bibitem{Braun0801}
M.L. Braun, J.M. Buhmann, and K.R. M\"uller.
\newblock {On Relevant Dimensions in Kernel Feature Spaces}.
\newblock {\em Journal of Machine Learning Research}, 9:1875--1908, 2008.

\bibitem{FreiWaMahLan2006}
N.~de~Freitas, Y.~Wang, M.~Mahdaviani, and D.~Lang.
\newblock {Fast Krylov Methods for N-body Learning}.
\newblock {\em Advances in Neural Information Processing Systems}, 18:251--258,
  2006.

\bibitem{Den1997}
M.~C. Denham.
\newblock {Prediction Intervals in Partial Least Squares}.
\newblock {\em Journal of Chemometrics}, 11(1):39--52, 1997.

\bibitem{Efr2004}
B.~Efron.
\newblock {T}he {E}stimation of {P}rediction {E}rror: {C}ovariance {P}enalties
  and {C}ross-{V}alidation.
\newblock {\em Journal of the American Statistical Association},
  99(467):619--633, 2004.

\bibitem{HanYu2001}
M.~Hansen and B.~Yu.
\newblock Model {S}election and {M}inimum {D}escripion {L}ength {P}rinciple.
\newblock {\em Journal of the American Statistical Association}, 96:746--774,
  2001.

\bibitem{HesSti1952}
M.~Hestenes and E.~Stiefel.
\newblock Methods for {C}onjugate {G}radients for {S}olving {L}inear {S}ystems.
\newblock {\em Journal of Research of the National Bureau of Standards},
  49:409--436, 1952.

\bibitem{Hos1988}
A.~Hoskuldsson.
\newblock {PLS Regression Methods}.
\newblock {\em Journal of Chemometrics}, 2(3):211--228, 1988.

\bibitem{IdeTsu2007}
T.~Ide and K.~Tsuda.
\newblock {Change Point Detection Using Krylov Subspace Learning}.
\newblock In {\em Proceedings of the SIAM International Conference on Data
  Mining}, pages 515 -- 520, 2007.

\bibitem{KonWhi2008}
A.~Kondylis and J.~Whittaker.
\newblock {Spectral Preconditioning of Krylov Spaces: Combining PLS and PC
  Regression}.
\newblock {\em Computational Statistics and Data Analysis}, 50(5):2588--2603,
  2008.

\bibitem{KraBouTut2008a}
N.~Kr\"amer, A.-L. Boulesteix, and G.~Tutz.
\newblock {Penalized Partial Least Squares with Applications to B-Spline
  Transformations and Functional Data}.
\newblock {\em Chemometrics and Intelligent Laboratory Systems}, 94:60--69,
  2008.

\bibitem{KraBra2007}
N.~Kr\"amer and M.L. Braun.
\newblock {Kernelizing PLS, Degrees of Freedom, and Efficient Model Selection}.
\newblock In {\em Proceedings of the 24th International Conference on Machine
  Learning}, pages 441 -- 448, 2007.

\bibitem{Lan1950}
C.~Lanczos.
\newblock An {I}teration {M}ethod for the {S}olution of the {E}igenvalue
  {P}roblem of {L}inear {D}ifferential and {I}ntegral {O}perators.
\newblock {\em {Journal of Research of the National Bureau of Standards}},
  45:225--280, 1950.

\bibitem{Man1987}
R.~Manne.
\newblock {Analysis of Two Partial-Least-Squares Algorithms for Multivariate
  Calibration}.
\newblock {\em {Chemometrics and Intelligent Laboratory Systems}}, 2:187--197,
  1987.

\bibitem{MarNae1989}
H.~Martens and T.~Naes.
\newblock {\em Multivariate {C}alibration}.
\newblock Wiley, New York, 1989.

\bibitem{Ong0401}
C.S. Ong, X.~Mary, S.~Canu, and A.J. Smola.
\newblock {Learning with non-positive kernels}.
\newblock In {\em Proceedings of the twenty-first international conference on
  Machine learning}, pages 639--646, 2004.

\bibitem{PhaHoo2002}
A.~Phatak and F.~de~Hoog.
\newblock Exploiting the {Connection} between {PLS}, {Lanczos}, and {Conjugate
  Gradients}: {Alternative Proofs} of {S}ome {Properties of PLS}.
\newblock {\em Journal of Chemometrics}, 16:361--367, 2002.

\bibitem{PhaRilPen2002}
A.~Phatak, P.M. Rilley, and A.~Penlidis.
\newblock The {A}symptotic {V}ariance of the {U}nivariate {PLS} {E}stimator.
\newblock {\em Linear Algebra and its Applications}, 354:245--253, 2002.

\bibitem{RaeLinGelWol1994}
S.~R\"annar, F.~Lindgren, P.~Geladi, and S.~Wold.
\newblock A {PLS} {K}ernel {A}lgorithm for {D}ata {S}ets with many {V}ariables
  and {F}ewer {O}bjects, {P}art {I}: {T}heory and {A}pplications.
\newblock {\em Journal of Chemometrics}, 8:111--125, 1994.

\bibitem{RosKra2006}
R.~Rosipal and N.~Kr{\"a}mer.
\newblock Overview and {R}ecent {A}dvances in {P}artial {L}east {Squares}.
\newblock In {\em Subspace, Latent Structure and Feature Selection Techniques},
  Lecture Notes in Computer Science, pages 34--51. 2006.

\bibitem{RosTre2001}
R.~Rosipal and L.J. Trejo.
\newblock Kernel {P}artial {L}east {S}quares {R}egression in {R}eproducing
  {K}ernel {H}ilbert {S}paces.
\newblock {\em Journal of Machine Learning Research}, 2:97--123, 2001.

\bibitem{RosTreMat2003}
R.~Rosipal, L.J. Trejo, and B.~Matthews.
\newblock {Kernel PLS-SVC for Linear and Nonlinear Classification}.
\newblock In {\em {Proceedings of the Twentieth International Conference on
  Machine Learning}}, pages 640--647, {Washington, DC}, 2003.

\bibitem{Saa1996}
Y.~Saad.
\newblock {\em Iterative methods for sparse linear systems}.
\newblock PWS, 1st edition, 1996.

\bibitem{SaiKraTsu2008}
H.~Saigo, N.Kr\"amer, and K.~Tsuda.
\newblock {Partial Least Squares Regression for Graph Mining}.
\newblock In {\em 14 International Conference on Knowledge Discovery and Data
  Mining}, pages 578--586, 2008.

\bibitem{SerLem2004}
S.~Serneels, P.~Lemberge, and P.J.~Van Espen.
\newblock Calculation of {PLS} {P}rediction {I}ntervals {U}sing {E}fficient
  {R}ecursive {R}elations for the {Jacobian} {M}atrix.
\newblock {\em Journal of Chemometrics}, 18:76--80, 2004.

\bibitem{Stein8101}
C.~Stein.
\newblock {Estimation of the mean of a multivariate normal distribution}.
\newblock {\em Annals of Statistics}, 9(6):1135--1151, 1981.

\bibitem{Wol1975}
H.~Wold.
\newblock {Path models with {Latent Variables}: The {NIPALS} {Approach}}.
\newblock In H.M.~Blalock et~al., editor, {\em {{Quantitative Sociology:
  International Perspectives on Mathematical and Statistical Model Building}}},
  pages 307--357. {Academic Press}, 1975.

\bibitem{WolRuhWolDun1984}
S.~Wold, H.~Ruhe, H.~Wold, and W.J.~Dunn III.
\newblock {The {C}ollinearity {P}roblem in {L}inear {R}egression. The {P}artial
  {L}east {S}quares ({PLS}) {A}pproach to {G}eneralized {I}nverses}.
\newblock {\em {SIAM Journal of Scientific and Statistical Computations}},
  5:735--743, 1984.

\bibitem{Ye9801}
J.~Ye.
\newblock {On Measuring and Correcting the Effects of Data Mining and Model
  Selection}.
\newblock {\em Journal of the American Statistical Association},
  93(441):120--131, 1998.

\end{thebibliography}

\end{document}